\newtheorem{problem}{Problem}
\newtheorem{example}{Example}
\newtheorem{definition}{Definition}
\newtheorem{lemma}{Lemma}
\newtheorem{theorem}{Theorem}
\begin{document}
\title{Feature Learning From Relational Databases}

\author{
  Hoang Thanh Lam\\
  Dublin Research Laboratory \\
  IBM Research \\
  Dublin, Ireland \\
  \texttt{t.l.hoang@ie.ibm.com} \\  
   \And
   Tran Ngoc Minh \\
   Dublin Research Laboratory \\
   IBM Research \\
   Dublin, Ireland \\
   \texttt{m.n.tran@ibm.com} 
    \And
   Mathieu Sinn \\
   Dublin Research Laboratory \\
   IBM Research \\
   Dublin, Ireland \\
   \texttt{mathsinn@ibm.com}    
\And
   Beat Buesser \\
  Dublin Research Laboratory \\
   IBM Research \\
  Dublin, Ireland \\
   \texttt{beat.buesser@ie.ibm.com} 
\And
   Martin Wistuba \\
   Dublin Research Laboratory \\
   IBM Research \\
   Dublin, Ireland \\
   \texttt{martin.wistuba@ie.ibm.com} 
}

\maketitle
\begin{abstract}
Feature engineering is one of the most important and tedious tasks in data science and machine learning. Therefore, automation of feature engineering for relational data has recently emerged as an important research problem. Most of the solutions for this problem proposed in the literature are rule-based approaches where a set of rules for feature generation is specified \textit{a-priori} by the user based on heuristics and experience. Although these methods show promising results, the generated set of features contains a lot of irrelevant features and does not include many important ones because the rules are predefined and problem independent. In this work, we present a neural network architecture that generates new features from relational data after supervised learning. Experiments with data of four Kaggle competitions show that the proposed approach is superior to the state-of-the-art solutions.
\end{abstract}

%
%



\section{Introduction}
Data science problems often require machine learning models to be trained on data in tables with one label and multiple feature columns. Data scientists must hand-craft additional features from the initial data. This process is known as \textit{feature engineering} and is one of the most tedious, but crucial, tasks in data science. Data scientists report that up to 95\% of the total project time must be allocated to carefully hand-craft new features to achieve the best models.\footnote{http://blog.kaggle.com/2016/09/27/grupo-bimbo-inventory-demand-winners-interviewclustifier-alex-andrey/}

We present a new method to automate feature engineering for relational databases using neural networks. This new approach to feature engineering significantly improves the productivity of data scientists by enabling quick estimates of possible features and contributes to the democratization of data science by facilitating feature engineering for data scientists with little experience for the most popular data storage format, as reported in a recent industry survey of 14,000 data scientists  \cite{survey} with at least 65\% working daily with relational data.

The full automation of feature engineering for general purposes is very challenging, especially in applications where specific domain knowledge is an advantage. However, recent work  \cite{DFS} indicates that for relational data impressive performance like top 24-36\% of all participants on Kaggle competitions can be achieved fully automated. A disadvantage of the cited work is its limitation to numerical data and neglect of temporal information. Moreover, the set of features usually contains redundant information because it is extracted using a set of predefined rules irrespective of the domain and targeted problems.

Our supervised feature learning approach uses a deep neural network architecture to learn transformations resulting in valuable features. We present experiments on different Kaggle competitions where our method outperforms the state-of-the-art solutions and achieves the top 6-10\% of all participants in three out of four competitions. These results are achieved with minimal effort on data preparation within weeks, while the Kaggle competitions lasted for a few months. The most important technical contributions of our work are:
\begin{itemize}
\item This is the first proposal for feature learning considering relational data in the automated data science literature. State-of-the-art solutions of automated feature engineering are based on heuristic rules.
\item We propose a novel deep neural network architecture and provide theoretical analysis on its capacity to learn features from relational data.
\item We provide a complexity study of the feature generation problem for relational data and prove that this problem is NP-hard.
\end{itemize}

\section{Backgrounds}

Let $D = \{T_0,T_1,\cdots,T_n\}$ be a database of tables. Consider $T_0$ as the main table which has a target column, several foreign key columns and optional attribute columns. Each entry in the main table corresponds to a training example.

\begin{figure}[tb]
    \centering
    \includegraphics[width=1.0\columnwidth]{./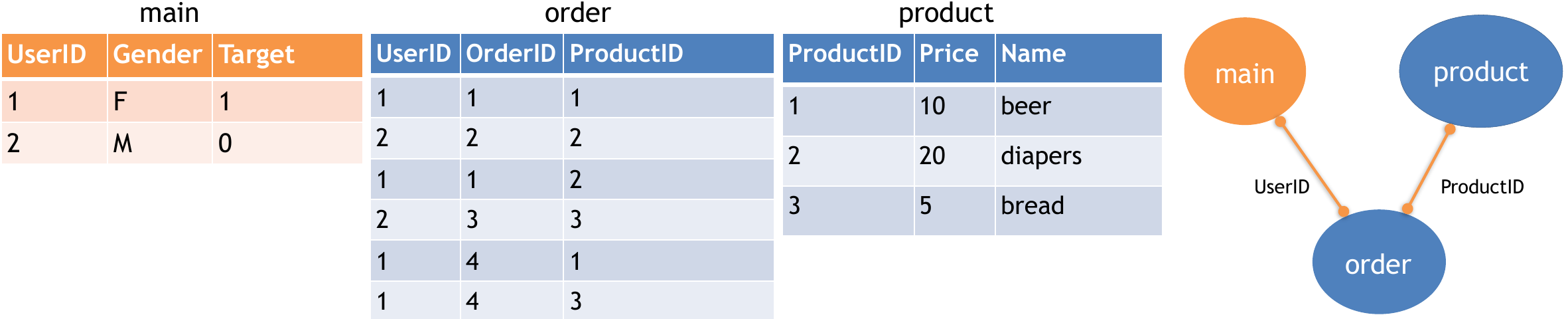}
    \caption{An example database and its relational graph.}
    \label{fig:database}
\end{figure}

\begin{example}
Figure \ref{fig:database} shows an example database with 3 tables. The user table (main) contains a prediction target column indicating whether a user is a loyal customer. User shopping transactions are kept in the order table and the product table includes product price and name.
\end{example}
A \textit{relational graph} is a graph where nodes and edges correspond to tables and links between tables via foreign-key relationships, respectively. Figure \ref{fig:database} shows the relational graph for the database in the same figure.

\begin{definition} [Joining path]
A joining path is a sequence $p = T_0 \xrightarrow{c_1} T_1 \xrightarrow{c_2} T_2 \cdots \xrightarrow{c_k} T_k \mapsto c$, where $T_0$ is the main table, each $T_i$ is a table in the database, $c_i$ is a foreign-key column connecting tables $T_{i-1}$ and $T_i$, and $c$ is a column (or a list of columns) in the last table $T_k$ on the path.
\end{definition}

\begin{example}
Joining the tables following the path $p = main \xrightarrow {UserID} order \xrightarrow {ProductID} product \mapsto Price$, we can obtain the price of all products that have been purchased by a user. The joined result can be represented as a relational tree defined in Definition \ref{def:relational tree} below.
\end{example}

\begin{definition} [Relational tree]
Given a training example with identifier $e$ and a joining path $p = T_0 \xrightarrow{c_1} T_1 \xrightarrow{c_2} T_2 \cdots \xrightarrow{c_k} T_k \mapsto c$, a relational tree, denoted as $t^p_e$, is a tree representation of the joined result for the entity $e$ following the joining path $p$. The tree $t^p_e$ has maximum depth $d = k$. The root of the tree corresponds to the training example $e$. Intermediate nodes at depth $0 < j < k$ represent the rows in the table $T_j$. A node at depth $j-1$ connects to a node at depth $j$ if the corresponding rows in table $T_{j-1}$ and table $T_j$ share the same value of the foreign-key column $c_j$. Each leaf node of the tree represents the value of the data column $c$ in the last table $T_k$.
\label{def:relational tree}
\end{definition}

\begin{example}
\label{exp:relational tree}
Figure \ref{fig:relational tree transformation prior art}.a shows a relational tree for $UserID=1$ following the joining path $p = main \xrightarrow {UserID} order \xrightarrow {ProductID} product \mapsto Price$. As can be seen, the user made two orders represented by two intermediate nodes at depth $d=1$. Besides, order 1 includes two products with $ProductID = 1$ and $ProductID = 2$, while order 4 consists of products with $ProductID = 1$ and $ProductID = 3$. The leaves of the tree carry the price of the purchased products.
\end{example}

\begin{definition} [Tree transformation]
A transformation function $f$ is a map from a relational tree $t^p_e$ to a fixed size vector $x \in R^l$, i.e. $f(t^p_e) = x$. Vector $x$ is called a feature vector.
\end{definition}
In general, feature engineering looks for relevant tree transformations to convert a tree into input feature vectors for machine learning models. For example, if we sum up the prices of all products carried at the leaves of the tree in Figure \ref{fig:relational tree transformation prior art}.a, we obtain the purchased product price sum which can be a good predictor for the loyalty customer target.

\section{Problem Definition and Complexity Analysis}

Assume that in a relational database $D$,  there are $m$ training examples $E=\{(e_1, y_1), (e_2, y_2), \cdots, (e_m, y_m)\}$ in the main table, where $Y=\{y_1, \cdots, y_m\}$ is a set of labels. Let $P =\{p_1,p_2,\cdots,p_q\}$ denote a set of joining paths in the relational graph of $D$. Recall that for each entity $e_j$ following a joining path $p_i$ we get a relational tree $t^{p_i}_{e_j}$. Let $f_{p_i} \in F$ (the set of candidate transformations) be a tree transformation function associated with the path $p_i$. 

Denote $f_{p_i}(t^{p_i}_{e_j}) = x^i_j$ as the feature vector extracted for $e_j$ by following the path $p_i$. Let $g(x^1_j \oplus x^2_j \oplus \cdots \oplus x^q_j) = \hat{y}_j$, be a machine learning model that estimates $y_j$ from a concatenation of the feature vectors obtained from $q$ joining paths. $L_{P, F, g}(Y,\hat{Y})$ is the loss function defined over the set of ground-truth labels and the set of estimated labels $\hat{Y}=\{\hat{y}_1, \cdots, \hat{y}_m\}$

\begin{problem}[Feature learning from relational data]
\label{prob:Feature learning}
Given a relational database, find the set of joining paths, transformations and models such that $P^*, F^*, g^* = argmin L_{P, F, g}(Y,\hat{Y})$.
\end{problem}

The following theorem shows that Problem \ref{prob:Feature learning} as an optimization problem is NP-hard even when $F$ and $g$ are given:
\begin{theorem}
Given a relational graph, the candidate set of transformations $F$ and model $g$, searching for the optimal path for predicting the correct label is an NP-hard problem.
\end{theorem}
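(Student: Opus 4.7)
The plan is to establish NP-hardness via a polynomial-time reduction from a classical NP-hard graph problem; the most convenient candidate is Longest Simple Path (equivalently, Hamiltonian Path) on a directed graph. The underlying intuition is that a joining path is itself a walk in the relational graph, and the number of admissible joining paths is already exponential in the presence of cycles; one only needs to turn that counting explosion into a genuine decision hardness by forcing the optimal path to encode a combinatorial certificate.

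Given an instance $(G=(V,E),s,k)$ of Longest Simple Path from a source $s$, I would build a relational database $DB$ whose relational graph is essentially $G$: introduce one table $T_v$ for each $v\in V$, and for every arc $(u,v)\in E$ add a foreign-key column in $T_u$ referencing $T_v$. Designate $T_s$ as the main table $T_0$ and insert a single training example $e$. Then populate the remaining tables so that along any joining path $p=T_s\xrightarrow{c_1}T_{v_1}\xrightarrow{c_2}\cdots\xrightarrow{c_d}T_{v_d}\mapsto c$, the relational tree $t^p_e$ is non-empty (i.e.\ actually reaches the leaf column $c$) if and only if $(s,v_1,\ldots,v_d)$ is a simple path in $G$. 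A natural way to arrange this is to let each row in $T_{v_i}$ carry a ``visited-set'' identifier in its foreign-key column that only matches a row in $T_{v_{i+1}}$ when the vertex $v_{i+1}$ has not yet appeared on the path.

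With the database in place, I would take $F$ to consist of a single transformation $f$ that returns the depth of the relational tree (or, equivalently, the number of leaves), take $g$ to be the identity, and set the label $y_e=k$ under squared loss. Then the minimum of $L_{P,F,g}(Y,\hat Y)$ is zero precisely when some joining path $p$ of length $k$ starting at $T_s$ produces a non-empty tree, which happens exactly when $G$ admits a simple path of length $k$ from $s$. Since the construction has $|V|$ tables, $O(|E|)$ foreign-key columns, and $O(|V|)$ rows per table, it is polynomial in the size of $G$; hence any polynomial-time algorithm for the feature-learning problem would decide Longest Simple Path in polynomial time, establishing NP-hardness.

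The main obstacle is the gadget design in the second step: one has to populate the tables so that the leaf-reaching condition coincides \emph{exactly} with simplicity of the underlying walk, while keeping row counts polynomial. Simulating a visited-set through foreign-key equalities is delicate, because rows join only on value equality of a single column, so the encoding must preclude ``cheating'' walks that revisit vertices or take shortcuts yet still yield a tree of the target depth. Once this combinatorial gadget is verified, the remaining pieces---polynomial size of $DB$, and the fact that zero loss under the chosen $f$, $g$, and label corresponds bijectively to the existence of the required simple path---are routine to check.
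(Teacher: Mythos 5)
Your high-level strategy is the same as the paper's in spirit---reduce from a Hamiltonian-type problem by building one single-purpose table per vertex, one foreign key per edge, taking $g$ to be the identity and setting the label to the target length so that zero loss certifies the combinatorial object. But the step you yourself flag as ``delicate,'' the visited-set gadget, is a genuine gap, and I do not believe it can be closed as described. Under the paper's join semantics, a node of the relational tree at depth $i$ is a row of $T_{v_i}$, and whether it extends to a row of $T_{v_{i+1}}$ depends only on equality of a single foreign-key value carried by that row. For the extension to be conditional on the \emph{entire} set of vertices already visited, distinct visited-sets must be distinguished by distinct rows or key values, and there are $2^{|V|}$ of them, so the table sizes (or the number of key values) blow up exponentially---exactly the polynomiality constraint you note. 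Without the gadget, non-emptiness of $t^p_e$ only certifies that $(s,v_1,\ldots,v_d)$ is a \emph{walk} in $G$, and with your $f$ (tree depth) zero loss then just asks whether $G$ admits a walk of length $k$ from $s$, which is decidable in polynomial time by matrix powering. So the reduction as written establishes nothing.

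The paper sidesteps this exact difficulty by not enforcing simplicity through the data at all: every table has a single row, so a joining path yields a non-empty join precisely when it is a walk in $G$, and the simplicity requirement is pushed into the transformation function, which is defined to return the length of the longest simple cycle contained in the path $p$ itself (a path-dependent quantity, not something computed from the joined data). That closes the hole you left open, though at a debatable price: the chosen $f$ is itself NP-hard to evaluate, so the hardness arguably resides in the fixed transformation rather than in the search over paths. If you want to repair your proof in the same style, replace ``$f$ returns the depth of the tree'' with ``$f$ returns the length of the longest simple path contained in $p$ if the join is non-empty, and $0$ otherwise''; then zero loss against the label $k$ is equivalent to the existence of a simple path of length $k$ from $s$, and the rest of your argument (polynomial size of the database, equivalence of zero loss with the certificate) goes through.
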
 
\begin{proof}
See Appendix.
\end{proof}
In section \ref{subsec:path generation} we explain efficient heuristic approaches for joining path generation. Finding the best model $g$ for given features is a model selection problem which has been intensively studied in the machine learning literature. Therefore, we limit the scope of this work to finding the good tree transformations.

\section{A Rule-Based Approach for Tree Transformation}
Given relational trees, there are different ways to transform the trees into features. In this section, we discuss rule-based approaches predefining tree transformations based on heuristics. Deep Feature Synthesis (DFS) (\cite{DFS}) is currently the state-of-the-art solution for automating feature engineering from relational data. Therefore, we briefly describe the DFS algorithm.

\begin{figure}[tb]
    \centering
    \includegraphics[width=1.0\columnwidth]{./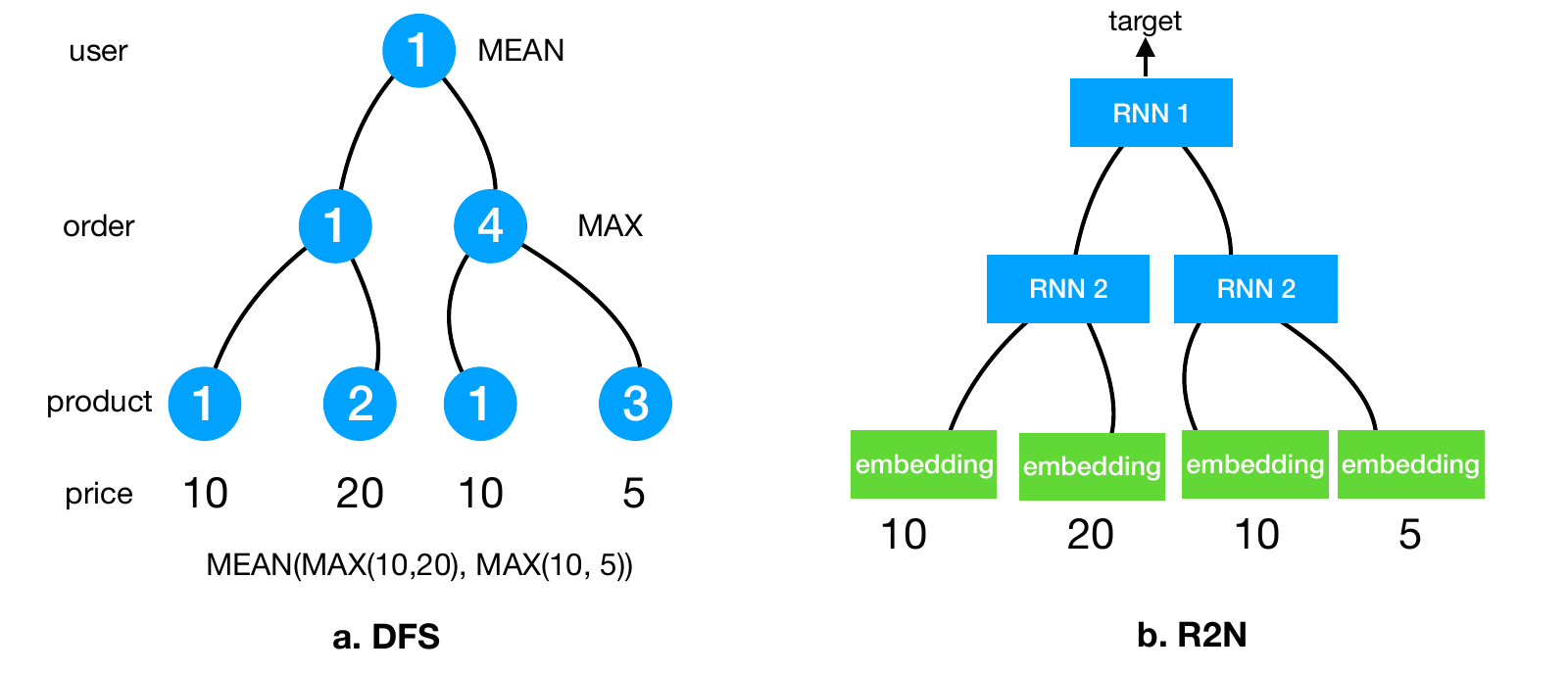}
    \caption{Tree transformations: DFS uses augmented aggregation functions at each level of the tree. These aggregation functions are specified \textit{a-priori} by users. On the other hand, R2N uses supervised learning to learn transformations via relational recurrent neural network.}
    \label{fig:relational tree transformation prior art}
\end{figure}

In DFS, the transformation function $f$ is a composition of basic aggregation functions such as AVG, SUM, MIN and MAX augmented at each depth level of a tree. For instance, for the relational tree $t^p_1$ in Figure \ref{fig:relational tree transformation prior art}.a, a feature can be collected for $UserID=1$ by applying MEAN and MAX at the root and the first depth level respectively. The aggregation function at each node takes input from the children and outputs a value which is in turn served as an input to its parent node. The example in Figure \ref{fig:relational tree transformation prior art}.a produces a feature $MEAN(MAX(10,20),MAX(10,5)) = 15$ corresponding to the average of the maximum price of purchased products by a user, which could be a good predictor for the user loyalty target.

DFS works well for numerical data, however, it does not support non-numerical data well. For instance, if the product name instead of price is considered, the given set of basic transformations becomes irrelevant. Moreover, when nodes of trees are temporally ordered the basic aggregations ignore temporal patterns in the data. After the features are generated by the given set of rules, feature selection is needed to remove irrelevant or duplicate features. The rule-based approach like DFS specifies the transformation functions based on heuristics regardless of the domain. In practice, predefined transformations can not be universally relevant for any use-case. In the next section we introduce an approach to circumvent this issue.

\section{Neural Feature Learning}
In this section, we discuss an approach that learns transformations from labelled data rather than being specified \textit{a-priori} by users.
\subsection{Relational Recurrent Neural Network}
\label{sec:R2N}
To simplify the discussion, we make the following assumptions (an extension to the general case is discussed in the next section):
\begin{itemize}
\item The last column $c$ in the joining path $p$ is a fixed-size numerical vector. 
\item All nodes at the same depth of the relational tree are ordered according to a predefined order.
\end{itemize}

With the given simplification, a transformation function $f$ and prediction function $g$ can be learned from data by training a deep neural network structure that includes a set of recurrent neural networks (RNNs). We call the given network structure \textit{relational recurrent neural network} (R2N) as it transforms relational data using recurrent neural networks.

There are many variants of RNN, in this work we assume that an RNN takes as input a sequence of vectors and outputs a vector. Although the discussion focuses on RNN cells, our framework also works for Long Short-Term Memory (LSTM) or Gated Recurrent Unit (GRU) cells.

\begin{definition} [Relational Recurrent Neural Network]
For a given relational tree $t^p_e$, a relational recurrent neural network is a function denoted as $R2N (t^p_e)$ that maps the relational tree to a target value $y_e$. An $R2N$ is a tree of $RNNs$, in which at every intermediate node, there is an $RNN$ that takes as input a sequence of output vectors of the $RNNs$ resident at its child nodes. In an $R2N$, all RNNs, resident at the same depth $d$, share the same parameter set $\theta_d$.
\end{definition}

\begin{example}
\label{exp:}
 Figure \ref{fig:relational tree transformation prior art}.b shows an R2N of the tree depicted in Figure \ref{fig:relational tree transformation prior art}.a. As it is observed, an $R2N$ summarizes the data under every node at depth $d$ in the relation tree via a function parameterized by an RNN with parameters $\theta_d$ (shared for all RNNs at the same depth). Compared to the DFS method in Figure \ref{fig:relational tree transformation prior art}.a, the transformations are learned from the data rather than be specified a-priori by the user.
\end{example}
 
\subsection{A Universal R2N}
\label{sec:R2N universal}
In this section, we discuss a neural network structure that works for the general case even without the simplifying assumptions made in Section \ref{sec:R2N}. 

\subsubsection{Dealing with Unstructured Data}
When input data is unstructured, we add at each leaf node an embedding layer that embeds the input into a vector of numerical values. The embedding layers can be learned jointly with the $R2N$ network as shown in Figure \ref{fig:relational tree transformation prior art}.b. For example, if the input is a categorical value, a direct look-up table is used, that maps each categorical value to a fixed size vector. If the input is a sequence, an RNN is used to embed a sequence to a vector. In general, the given list can be extended to handle more complicated data types such as graphs, images and sequences.
\subsubsection{Dealing with Unordered Data}
When data is not associated with an order, the input is a multi-set instead of a sequence. In that case, the transformation function $f(s)$ takes input as a multi-set, we call such function as set transformation. It is important to notice that $f(s)$ is invariant in any random permutation of $s$.  The following theorem shows that there is no recurrent neural network $rnn(s, W, H, U)$  that can approximate any set function except the constant or the sum function:

 \begin{theorem}[Expressiveness]
 \label{theo:set function}
A recurrent neural network  with linear activation is a set function, if and only if it is either a constant function or can be represented as: 
\begin{eqnarray}
rnn(s, W, H, U) &=& c + h_0U + |s|*bU + UW*sum(s) 
\label{eq:rnn set}
\end{eqnarray}
\end{theorem}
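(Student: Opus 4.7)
The plan is to unroll the linear recurrence into closed form and then force the resulting expression to be invariant under every permutation of the input sequence.

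First, I would derive a closed form for the output. Writing the linear update in row-vector convention as $h_t = h_{t-1} H + s_t W + b$ with output $h_n U + c$, induction on $n = |s|$ gives
\begin{equation*}
rnn(s, W, H, U) \;=\; c + h_0 H^n U + b\!\left(\sum_{k=0}^{n-1} H^k\right)\!U + \sum_{i=1}^n s_i\,(W H^{n-i} U).
\end{equation*}
Only the last sum depends on $s$, and the coefficient of $s_i$ is the position-dependent matrix $W H^{n-i} U$.

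Second, I would exploit permutation invariance to collapse these coefficients. Because $rnn$ is a set function, any transposition $(i\,j)$ of entries with all others held fixed must preserve the output. Running $s_i$ over a basis while zeroing out the rest then yields $W H^{n-i} U = W H^{n-j} U$ for all $i, j \in \{1,\dots,n\}$. Hence $W H^k U$ is independent of $k$ on $\{0,\dots,n-1\}$. Since the assumption must hold for arbitrarily large $n$, the shared value is $W U$, giving the key identity
\begin{equation*}
W H^k U \;=\; W U \qquad \text{for every } k \ge 0,
\end{equation*}
which collapses the $s$-dependent sum to $W U \cdot \mathrm{sum}(s)$.

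Third, I would treat the $s$-free remainder. Comparing sequences of different lengths (or equivalently letting $b$ and $h_0$ vary as free parameters while the structure of the RNN remains a set function for every length) forces $h_0 H^n U = h_0 U$ and $b H^k U = b U$ for all $n, k$, so $b \sum_{k=0}^{n-1} H^k U = n \cdot b U = |s|\cdot bU$. Substituting back reproduces the claimed formula. The alternative branch $WU = 0$ makes the $s$-dependent term vanish, which is the ``constant function'' alternative in the theorem.

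The main obstacle, I expect, is step two: one must argue $W H^k U = W U$ for all $k \ge 0$ simultaneously, not merely within a fixed length. That requires a careful global use of permutation invariance and an explicit choice of test inputs (one-hot vectors isolating each position). A secondary subtlety is the cross-length collapse in step three, which I would justify by stating upfront that the RNN is required to be a set function on sequences of every length; without this the formula's $h_0 U + |s|\,bU$ shape would not be forced and one would be left with the unreduced powers $h_0 H^n U$ and $b\sum_k H^k U$.
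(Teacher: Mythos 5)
Your steps 1 and 2 are sound and are, if anything, more systematic than what the paper does: the paper never unrolls the general recurrence, it only computes the two-element swap, extracts the single identity $U(H-1)W(x_0-x_1)=0$, packages this as a lemma saying the network is either degenerate or has $H=1$, and then obtains equation~\ref{eq:rnn set} by substituting $H=1$ back into the recurrence. Your transposition argument, giving $WH^{n-i}U=WH^{n-j}U$ via one-hot test inputs, is the same mechanism applied position by position, and it correctly collapses the $s$-dependent part to $WU\cdot\mathrm{sum}(s)$.

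The genuine gap is in your step 3. Permutation invariance only constrains the output within a fixed length, and a set function is perfectly free to depend on the cardinality $|s|$ (for instance $f(s)=|s|$ is a set function), so comparing sequences of different lengths imposes no constraint whatsoever on the $s$-free terms: nothing in the hypothesis forces $h_0H^nU=h_0U$ or $bH^kU=bU$, and that step as written does not go through. The repair is exactly the observation the paper's lemma is built on: in the scalar, non-degenerate case $WU\neq 0$, your own identity $WHU=WU$ already forces $H=1$, after which $h_0H^nU=h_0U$ and $b\sum_{k=0}^{n-1}H^kU=|s|\,bU$ hold automatically; you should extract $H=1$ first and only then read off the $s$-free terms, rather than trying to force them by a cross-length argument. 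Two further caveats apply equally to your proposal and to the paper: in the degenerate branch $W=0$, $U\neq0$, $H\neq1$ the network is a legitimate set function (its output depends only on $|s|$, through $h_0H^nU+b\sum_kH^kU$) yet is neither constant nor of the form~\ref{eq:rnn set}, so strictly speaking a third alternative is missing from the statement; and for genuinely matrix-valued weights $WHU=WU$ does not imply $H=I$, so the reduction to $H=1$ is only justified in the scalar setting that the paper actually analyses.
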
 
\begin{proof}
See Appendix.
\end{proof}

From Equation \eqref{eq:rnn set} we can imply that an RNN cannot approximate the max set and min set functions unless we define an order on the input data. Therefore, we sort the input vectors according to the mean value of the vector to ensure a consistent order for input data.

 \subsection{Joining Path Generation}
 \label{subsec:path generation}
 So far, we have discussed feature learning from relational trees extracted from a database with given joining paths. In this section, we discuss various strategies to search for these relevant joining paths. Because finding the optimal paths is hard, we limit the maximum depth of the joining paths and propose three simple heuristic traversing strategies: (i) simple (only simple paths with no repeating nodes) (ii) forward only (nodes are assigned depths based on breadth-first traversal from main table, only considering paths with increasing node depths) and  (iii) all (all paths are considered).

According to our observations, forward only is the most efficient which is our first choice. The other strategies are supported for the sake of completeness. For any strategy, the joined tables can be very large, especially when the maximum depth is set high. Therefore, we apply sampling strategies that limit the join size per join key value and caching intermediate tables to save memory and speed up the join operations (see the Appendix).

\subsection{Networks for Multiple Joining Paths}
 Recall that for each joining path $p_i$, we create an $R2N_i$ network that learns features from the data generated by the joining path. In order to jointly learn features from multiple joining paths ${p_1,p_2,\cdots,p_q}$, we use a fully connected layer that transforms the output of the $R2N_i$ to a fixed size output vector before concatenating these vectors and use a feed-forward network to transform them into a desired final output size. The entire network structure is illustrated in Figure \ref{fig:bigR2N}.
For classification problems, additional softmax function is applied on the final output vector to obtain the class prediction distribution for classification problem. 
\begin{figure}[tb]
    \centering
    \includegraphics[width=1.0\columnwidth]{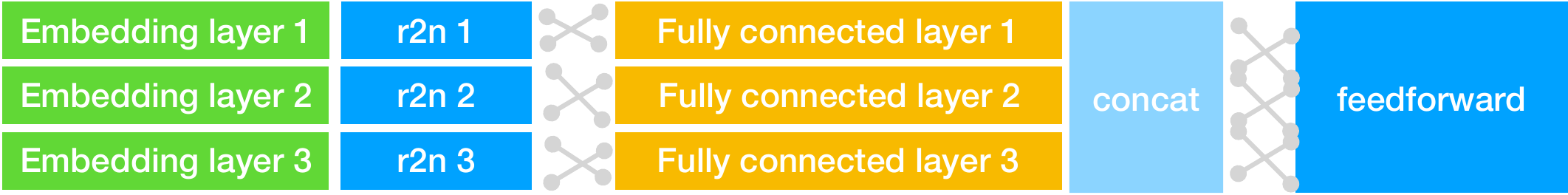}
    \caption{Network for multiple paths: data from each joining path is transformed using an embedding layer and a R2N before being combined by a fully connected layers and a feedforward layer.}
    \label{fig:bigR2N}
\end{figure}

\subsection{R2N Ensemble}
Deep neural networks provide capabilities to learn complicated data transformation, however, for small datasets, it easily overfits the training data. Therefore, we use a linear ensemble of 10 R2Ns with equal weights. These individual R2Ns are trained by randomly bootstrapping the training data. This ensemble methods provided robust results especially for small datasets.

\section{Experiments}
The DFS algorithm is currently considered as the state-of-the-art solution for automated feature engineering in relational data. Therefore, we compare R2N with DFS in addition to manual feature engineering approaches provided by Kaggle participants.

\subsection{Data Preparation}
Four Kaggle competitions with complex relational graphs and different problem types (classification, regression and recommendation) have been selected (see Figure \ref{fig:egraph}). The following steps have been applied to make the raw data accessible to our system:
\begin{enumerate}
\item Every dataset needs a main table with training instances. The training data must reflect exactly how the test data was created. This ensures the consistency between training and test settings.
\item Users need to explicitly declare the database schema. 
\item Each entity is identified by a unique key in a key column. We added foreign key columns to represent those entities if the keys are missing in the original data.
\end{enumerate}

\begin{figure*}[tb]
    \centering
    \includegraphics[width=1.0\textwidth]{./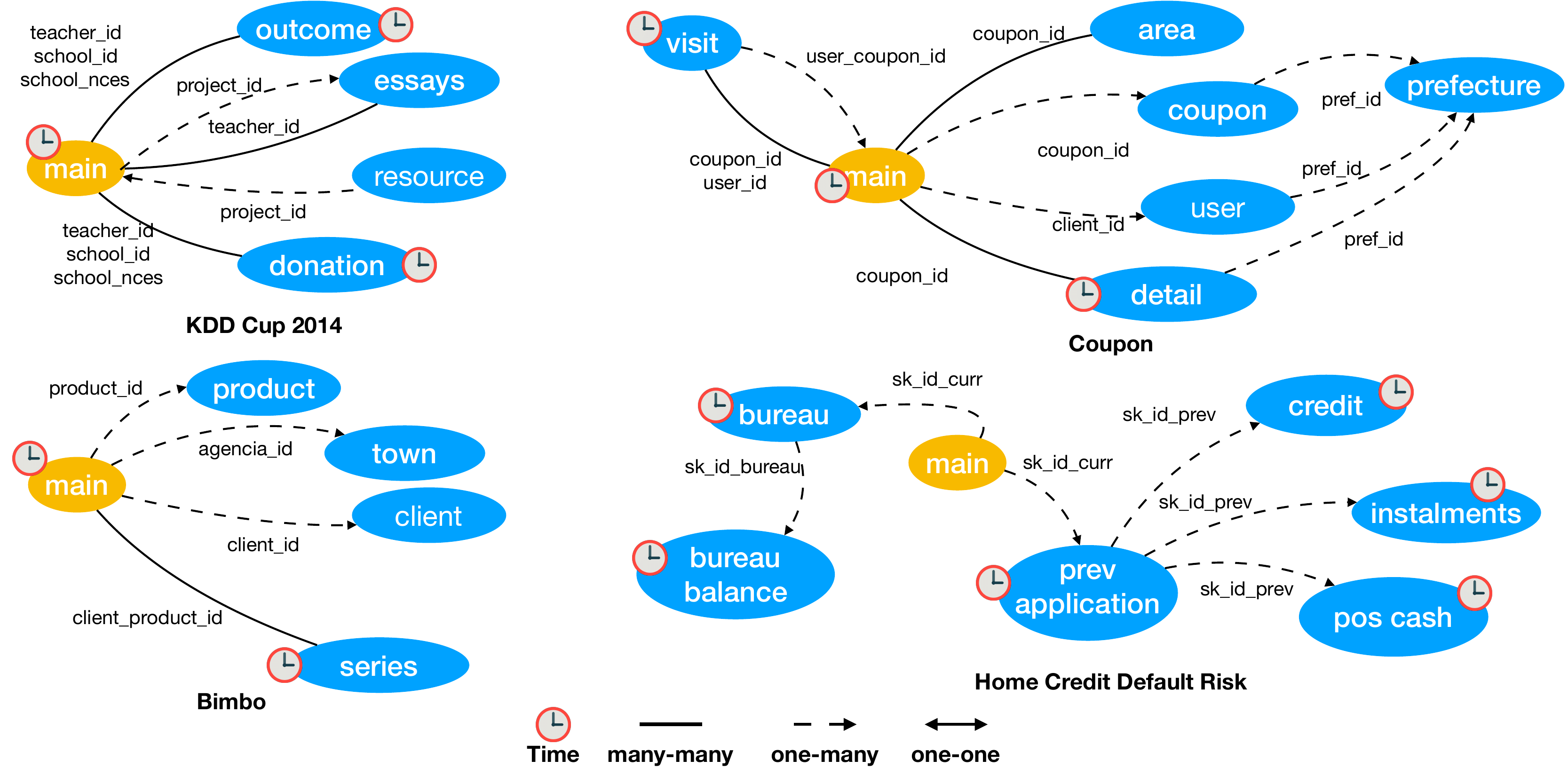}
    \caption{Relational graphs of the Kaggle datasets}
    \label{fig:egraph}
\end{figure*} 
\label{sec:preparation}

It is important to notice that the first step is an obligation for all Kaggle participants. The second step is trivial as it only requires declaring the table column's data types and primary/foreign key columns. Basic column types such as numerical, boolean, timestamps, categorical etc., are automatically determined by our system. The last step requires knowledge about the data, but time spent on creating foreign key columns is negligible compared to creating hand-crafted features.

\paragraph{Grupo Bimbo} Participants were asked to predict weekly sales of fresh bakery products on the shelves of over 1 million stores across Mexico. The database contains 4 different tables:
\begin{itemize}
\item \textit{sale series}: the sale log with weekly sale in units of fresh bakery products. Since the evaluation is based on Root Mean Squared Logarithmic Error (RMSLE), we take the logarithm of the demand.
\item \textit{town state}: geographical location of the stores
\item \textit{product}: additional information, e.g. product names
\item \textit{client}: information about the clients
\end{itemize} 
The historical sales data spans from week 1-9 while the test data spans from weeks 10-11. We created the main table from the sale series table with data of the weeks 8-9. Data of prior weeks was not considered because there was a shortage of historical sales for the starting weeks. The main table has a target column which is the demand of the products and several foreign key columns and some static attributes of the products. 

\paragraph{Coupon Purchase} Participants were asked to predict the top ten coupons which were purchased by the users in the test weeks. The dataset includes over one year of historical logs about coupon purchases and user activities: 
\begin{itemize}
\item \textit{coupon list}: coupon's info: location, discount price and the shop.
\item \textit{coupon detail}: more detailed information about the coupons.
\item \textit{coupon area}: categorical information about the coupon types and its display category on the website
\item \textit{coupon visit}: historical log about user activities on the coupon websites. User and coupon keys are concatenated to create a user-coupon key that represents the user-coupon pair which is the target entity of our prediction problem.
\item \textit{user}: demographic information about the users.
\item \textit{prefecture}: user and coupon geographical information.
\end{itemize} 
We cast the recommendation problem into a classification problem by creating a main table with 40 weeks of data before the test week. To ensure that the training data is consistent with the test data, for each week, we find coupons with released date falling into the following week and create an entry in the main table for each user-coupon pair. We label the entry as positive if the coupon was purchased by that user in the following week and negative otherwise. The main table has three foreign keys to represent the coupons, the users and the user-coupon pairs. 

\paragraph{KDD Cup 2014} Participants were asked to predict which project proposals are successful based on their data about:
\begin{itemize}
\item \textit{projects}: project descriptions, school and teacher profiles and locations. The project table is considered as the main table in our experiment as it contains the target column.
\item \textit{essays}: written by teachers who proposed the proposal as a project goal statement.
\item \textit{resources}: information about the requested resources.
\item \textit{donation}: ignored as no data for test set.
\item \textit{outcome}: historical outcome of the past projects. We add three missing key columns (school ID, teacher ID, school NCES ID) to the outcome table to connect it to the main table. This allows our system to explore the historical outcome for each school, teacher and school NCES ID.
\end{itemize}

\paragraph{Home Credit Default Risk Prediction} Participants were asked to predict loan applicants abilities for repayment:
\begin{itemize}
\item \textit{application}: This is the main table with static information about applicants and a binary target column indicating good or bad credit risk.
\item \textit{bureau}: All the client's previous credits provided by other financial institutions that were reported to Credit Bureau.
\item \textit{bureau balance}: Monthly balances of previous credits in Credit Bureau.
\item \textit{pos cash balance}: Monthly balance snapshots of previous POS (point of sales) and cash loans that the applicant had with Home Credit.
\item \textit{credit card balance}: Monthly balance snapshots of previous credit cards that the applicant has with Home Credit.
\item \textit{previous application}: All previous applications for Home Credit loans of clients who have loans in our sample.
\item \textit{instalments payments}: Repayment history for the previously disbursed credits in Home Credit related to the loans.
\end{itemize} 
  In all datasets, we experimented with the forward only graph traversal policy. In the given policy, the maximum search depth is always set to the maximum depth of the breadth-first search of the relational graph starting from the main table. 
\subsection{Experimental Settings}
We used 10 Tesla K40 GPU (12 GB of memory each) and a Linux machine with 4 CPU cores with 100 GB of memory. Training one model until convergence needs 7 days. Auto-tuning the R2N hyper-parameters was not considered because of limited time budget, instead we chose the size of the network based on our available computing resource. Table \ref{tab:parameters} reports the hyper-parameters used in our experiments. Parameters related to optimization such as learning rate have been chosen according to recommendations in the literature.
\begin{table}[ht]
  \small
  \begin{center}
  \caption {Parameter settings for OneBM and the R2N networks} \label{tab:parameters} 
  \begin{tabular}{  l c }
   
    \hline
    \textbf{parameter} & \textbf{value} \\ \hline
    Join limit per key value & 50  \\ 
    Maximum joined table size & $10^9$  \\ 
    Optimization algorithm for backprop & ADAM \\ 
    Learning rate of ADAM & 0.01 \\ 
    Initial weights for FC and feed-forwards & Xavier \\ 
    Output size of FCs & 16 \\ 
    \# hidden layers in feedforward layers & 1 \\ 
    \# hidden nodes  in feedforward layers & 32 \\ 
    Categorical and text embedding size  & 10\\
    Mini-batch size & 1000\\
	RNN cell & LSTM \\ 
	LSTM cell size & 16 \\ 
	Max input sequence size & 50 \\ 
	Early termination after no improvement on & 25\% training data \\ 
	Bootstrapping ratio & 90\% training data \\ 
	Validation ratio & 10\% training data \\ \hline
  \end{tabular}
  \end{center}
\end{table}

\begin{table}
  \caption{Data science competition results}
  \label{tab:results }
  \centering
  \begin{tabular}{l l l l l l l}
    \toprule
    Competition     & Task           & Metric & DFS       & R2N  \\
    \midrule
    KDD Cup 2014    & Classification & AUC    & 0.586      & \textbf{0.617}     \\
    Groupo Bimbo    & Regression     & LRMSE  & NA           & \textbf{0.47631 }  \\
    Coupon & Recommender       & MAP@10 & 0.00563  & \textbf{0.00638} \\
    Home Credit & Classification & AUC    & 0.777      & \textbf{0.7826} \\
    \bottomrule
  \end{tabular}
\end{table}

Because the results of DFS are highly sensitive to how we prepare the input data and create the relational graphs, we report published results of DFS on KDD Cup 2014 \citep{DFS} and Home Credit Risk Prediction\footnote{A member of the team who developed FeatureTools (DFS) published their results in this link \url{https://www.kaggle.com/willkoehrsen/automated-feature-engineering-basics} }. For Coupon and Bimbo datasets we prepared data for DFS following the guidance in the DFS open-source repository\footnote{http://featuretools.com/}. The details of data preparation for those datasets for DFS is discussed in the appendices. We used XGBoost to train models on the features extracted by DFS. XGBoost is currently the most popular model among data scientists when the features are manually engineered. Hyper-parameters of XGBoost have been auto-tuned by Bayesian optimization in 50 steps \cite{Snoek2012}. All results are reported based on the Kaggle private ranking leader board except for the Home Credit Risk Prediction where the result of DFS is only available on the public leader board.

\subsection{Kaggle Competition Results and Discussion}

\begin{table*}[]
 \caption{Ranking of  R2N and DFS method }
  \label{tab:ranking }
   \centering
\begin{tabular}{lllllll}
 \toprule
\multirow{2}{*}{} & \multicolumn{2}{c}{\textbf{Rank}} & \multicolumn{2}{c}{\textbf{Top (\%)}} & \multicolumn{2}{c}{\textbf{Medal}} \\
\midrule
                  & R2N             & DFS             & R2N               & DFS               & R2N              & DFS             \\
 \midrule
KDD Cup 2014      & \textbf{42/472 }              & 142/472               & \textbf{8.8}                 & 30                 & \textbf{Silver }               & No               \\
Coupon            & \textbf{68/1076}               &449/1076               & \textbf{6.3}                 & 41.7                 & \textbf{Bronze }               & No              \\
Bimbo             & \textbf{188/1969}               & NA               &\textbf{ 9.5}                 & NA                 & \textbf{Bronze}                & No               \\
Home Credit       & 3880/7198               & 4247/7198               & 53.9                 & 59                & No               & No    \\
 \bottomrule          
\end{tabular}
\end{table*}

Table \ref{tab:results } reports the results of DFS and R2N in the four Kaggle competitions (DFS on Groupo Bimbo is not available because it did not finish within 2 weeks). For all comparable datasets, R2N outperformed DFS with a significant margin. The results of DFS have been obtained by using DFS features and state-of-the-art models like LightGBM (Home Credit), XGBOOST (Coupon and Bimbo) and Random Forest (KDD Cup 2014). All models have been auto-tuned using Bayesian optimization. Even when R2N only uses a simple fully connected layer it achieves superior results. This shows that transformations learned from data by R2N result in more useful features than the predefined transformations in DFS.

Table \ref{tab:ranking } shows the results of R2N and DFS compared to all Kaggle participants. In three out of four competitions, R2N was as good as the top 10\% of all data scientists, especially for Coupon data, R2N result is comparable to top 6\% participants. For the first time in the automated data science literature, R2N is able to achieve late medals in these competitions completely automated. This result is achieved in a few hours for data preprocessing and model training  within a  week. In comparison, the winning team \footnote{https://github.com/KazukiOnodera/Home-Credit-Default-Risk} of the Home Credit Default Risk competition was comprised of 10 experienced data scientists, working for 3 months and writing more than 25900 lines of code for manual feature engineering. This encouraging result shows that by using R2N data scientists can reduce the time and efforts required so far for feature engineering tasks significantly with results competitive to experienced data scientist teams. The results of DFS are also reported in this table where the ranking of DFS is slightly worse than R2N. The benefit of using DFS compared to R2N in practice is the simple and useful interpretability of its features, whereas R2N is a black-box solution where interpretation of the feature transformations is much more difficult.

 \begin{figure*}[tb]
    \centering
    \includegraphics[width=1.0\textwidth]{./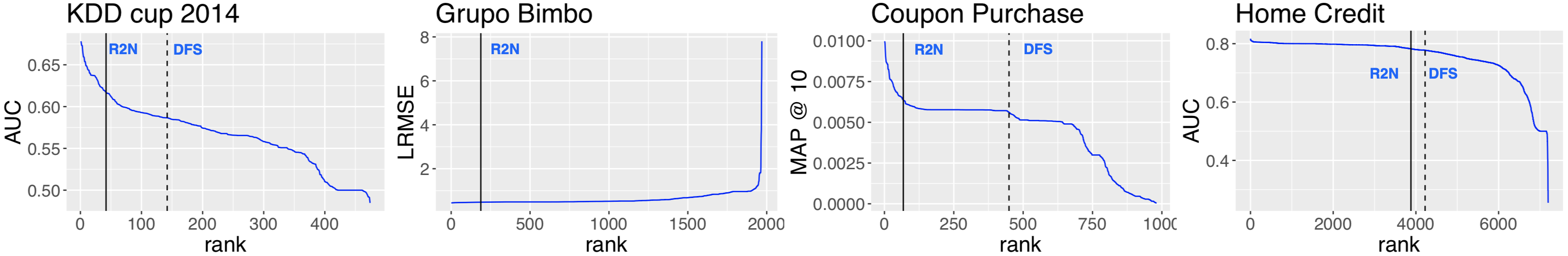}
    \caption{Final results on the ranked leader board. Our hypothetical ranks are shown as vertical lines for R2N (solid line) and DFS (dashed line).}
    \label{fig:rank}
\end{figure*} 

Figure \ref{fig:rank} shows a comparison between R2N (vertical solid line), DFS (vertical dashed line) and the distribution of all Kaggle participants. In terms of prediction accuracy, our method outperforms most participants and achieves results very close to the best teams. Among the four competitions, the Home Credit Default Risk prediction competition is the most recent one and in recent Kaggle competitions the organizer allows participants sharing their solutions during the competitions. Many participants have used these openly available solutions as a starting point. Therefore, the competition is more tough with very small difference among the top teams and the ranking is less indicative of the performance. In the future, inexperienced data scientists will be able to use R2N as an initial solution that is close to the best results achieved by a large team of experienced data scientists.

\section{Related Work}

The data science work-flow includes five basic steps: problem formulation, data acquisition, data curation, feature engineering, model selection and hyper-parameter tuning. Most related works focus on automating the last two steps which will be reviewed in the following subsections.

\subsection{Automatic Model Selection and Tuning}

Auto-Weka  \cite{kotthoff2016auto,ThoHutHooLey13-AutoWEKA} and Auto-SkLearn  \cite{feurer2015efficient} are two popular tools to find the best combination of data pre-processing, hyper-parameter tuning and model selection. Both tools are based on Bayesian optimization \cite{brochu2010tutorial} to improve over exhaustive grid-search. Cognitive Automation of Data Science (CADS) \cite{cads} is another system built on top of Weka, SPSS and R to automate model selection and hyper-parameter tuning processes. TPOT \cite{tpot} is another system that uses genetic programming to find the best model configuration and pre-processing work-flow. In summary, automation of hyper-parameter tuning and model selection is a very attractive research topic with very rich literature. The key difference between our work and these works is that, while the state-of-the-art focuses on optimization of models given a ready set of features stored in a single table, our work focuses on preparing features as an input to these systems from relational databases starting with multiple tables. Therefore, our work and the cited literature are orthogonal to each other.

\subsection{Automatic Feature Engineering}

Only a few papers have been published focusing on automated feature engineering for general problems. The main reason is that feature engineering is both domain and data specific. Here, we concentrate on tools for relational databases.

DFS (\cite{DFS}) was the first system that automates feature engineering from relational data with multiple tables. DFS has been shown to achieve good results on public data science competitions.
Our work is also closely related to inductive logic programming, e.g. \cite{luc} where relational data is unfolded via propositionalizing, \cite{propositionalization} or wordification  \cite{wordification} that discretises the data into words from which the joined results can be considered as a bag of words. Each word in the bag is a feature for further predictive modelling. Wordification is a rule-based approach, which does not support unstructured and temporally ordered data. In \cite{arno}, the authors proposed an approach to learn multi-relational decision tree induction for relational data. This work does not support temporally ordered data and is limited to decision tree models. Our work extended DFS to deal with non-numerical and temporally ordered data. Moreover, we resolved the redundancy issues of rule-based approaches via learning features rather than relying on predefined rules.

Statistical relational learning (StarAI) presented in \cite{lisa} is also related to our work. Recently, a deep relational learning approach was proposed  \cite{deeprl} to learn to predict object's properties using an object's neighborhood information. However, the given prior art does not support temporally ordered data and unstructured properties of objects. Besides, an important additional contribution of our work is the study of the theoretical complexity of the feature learning problem for relational data as well as the universal expressiveness of the network structures used for feature learning. 

Cognito  \cite{cognito} automates feature engineering for single tables by applying recursively a set of predefined mathematical transformations on the table's columns to obtain new features from the original data. Since it does not support relational databases with multiple tables it is orthogonal to our approach.

Prior versions of this work in progress were informally published as technical reports on Arxiv (see \cite{onebm} and \cite{r2n}). These are early technical reports while this work was under development. The key difference between this paper compared to earlier technical reports is the use of bootstrapping method to improve R2N's robustness.
\section{Conclusion and Future Work}

We have shown that feature engineering for relational data can be automated using the transformation learned from relational data via the R2N network architecture. This opens many interesting research directions for the future. For example, the R2N network structure in this work is not auto-tuned due to efficiency issue. Future work could focus on efficient methods for network structure search to boost the current results even more. Second, there are chances to improve the results further if a smarter graph traversal policy is considered. Although we have proved that finding the best joining path is NP-hard, the theoretical analysis assumes that there is no domain knowledge about the data. We believe that exploitation of semantic relation between tables and columns can lead to better search algorithm and better features. Finally, the last layer of R2N is a fully connected layer which is known to be less robust than recent state-of-the-art models used in practice like the gradient boosted trees. Although we have employed an ensemble with bootstrapping, which shows improvement over a single R2N model, we believe that with gradient boosting methods the results will be more robust. How to do gradient boosting R2N efficiently is an open research problem.

\bibliographystyle{ACM-Reference-Format}
\bibliography{sigproc}


\begin{thebibliography}{00}


\ifx \showCODEN    \undefined \def \showCODEN     #1{\unskip}     \fi
\ifx \showDOI      \undefined \def \showDOI       #1{#1}\fi
\ifx \showISBNx    \undefined \def \showISBNx     #1{\unskip}     \fi
\ifx \showISBNxiii \undefined \def \showISBNxiii  #1{\unskip}     \fi
\ifx \showISSN     \undefined \def \showISSN      #1{\unskip}     \fi
\ifx \showLCCN     \undefined \def \showLCCN      #1{\unskip}     \fi
\ifx \shownote     \undefined \def \shownote      #1{#1}          \fi
\ifx \showarticletitle \undefined \def \showarticletitle #1{#1}   \fi
\ifx \showURL      \undefined \def \showURL       {\relax}        \fi
\providecommand\bibfield[2]{#2}
\providecommand\bibinfo[2]{#2}
\providecommand\natexlab[1]{#1}
\providecommand\showeprint[2][]{arXiv:#2}

\bibitem[\protect\citeauthoryear{Biem, Butrico, Feblowitz, Klinger, Malitsky,
  Ng, Perer, Reddy, Riabov, Samulowitz, Sow, Tesauro, and Turaga}{Biem
  et~al\mbox{.}}{2015}]%
        {cads}
\bibfield{author}{\bibinfo{person}{Alain Biem}, \bibinfo{person}{Maria
  Butrico}, \bibinfo{person}{Mark Feblowitz}, \bibinfo{person}{Tim Klinger},
  \bibinfo{person}{Yuri Malitsky}, \bibinfo{person}{Kenney Ng},
  \bibinfo{person}{Adam Perer}, \bibinfo{person}{Chandra Reddy},
  \bibinfo{person}{Anton Riabov}, \bibinfo{person}{Horst Samulowitz},
  \bibinfo{person}{Daby~M. Sow}, \bibinfo{person}{Gerald Tesauro}, {and}
  \bibinfo{person}{Deepak~S. Turaga}.} \bibinfo{year}{2015}\natexlab{}.
\newblock \showarticletitle{Towards Cognitive Automation of Data Science}. In
  \bibinfo{booktitle}{{\em Proceedings of the Twenty-Ninth {AAAI} Conference on
  Artificial Intelligence, January 25-30, 2015, Austin, Texas, {USA.}}}
  \bibinfo{pages}{4268--4269}.
\newblock


\bibitem[\protect\citeauthoryear{Brochu, Cora, and De~Freitas}{Brochu
  et~al\mbox{.}}{2010}]%
        {brochu2010tutorial}
\bibfield{author}{\bibinfo{person}{Eric Brochu}, \bibinfo{person}{Vlad~M Cora},
  {and} \bibinfo{person}{Nando De~Freitas}.} \bibinfo{year}{2010}\natexlab{}.
\newblock \showarticletitle{A tutorial on Bayesian optimization of expensive
  cost functions, with application to active user modeling and hierarchical
  reinforcement learning}.
\newblock \bibinfo{journal}{{\em arXiv preprint arXiv:1012.2599\/}}
  (\bibinfo{year}{2010}).
\newblock


\bibitem[\protect\citeauthoryear{Feurer, Klein, Eggensperger, Springenberg,
  Blum, and Hutter}{Feurer et~al\mbox{.}}{2015}]%
        {feurer2015efficient}
\bibfield{author}{\bibinfo{person}{Matthias Feurer}, \bibinfo{person}{Aaron
  Klein}, \bibinfo{person}{Katharina Eggensperger}, \bibinfo{person}{Jost
  Springenberg}, \bibinfo{person}{Manuel Blum}, {and} \bibinfo{person}{Frank
  Hutter}.} \bibinfo{year}{2015}\natexlab{}.
\newblock \showarticletitle{Efficient and robust automated machine learning}.
  In \bibinfo{booktitle}{{\em Advances in Neural Information Processing
  Systems}}. \bibinfo{pages}{2962--2970}.
\newblock


\bibitem[\protect\citeauthoryear{Getoor and Taskar}{Getoor and Taskar}{2007}]%
        {lisa}
\bibfield{author}{\bibinfo{person}{Lise Getoor} {and} \bibinfo{person}{Ben
  Taskar}.} \bibinfo{year}{2007}\natexlab{}.
\newblock \bibinfo{booktitle}{{\em Introduction to Statistical Relational
  Learning (Adaptive Computation and Machine Learning)}}.
\newblock \bibinfo{publisher}{The MIT Press}.
\newblock
\showISBNx{0262072882}


\bibitem[\protect\citeauthoryear{Kaggle}{Kaggle}{2017}]%
        {survey}
\bibfield{author}{\bibinfo{person}{Kaggle}.} \bibinfo{year}{2017}\natexlab{}.
\newblock \showarticletitle{The state of machine learning and data science. A
  survey of 14000 data scientist.}.
  \bibinfo{howpublished}{\url{https://www.kaggle.com/surveys/2017}}.
\newblock


\bibitem[\protect\citeauthoryear{Kanter and Veeramachaneni}{Kanter and
  Veeramachaneni}{2015}]%
        {DFS}
\bibfield{author}{\bibinfo{person}{James~Max Kanter} {and}
  \bibinfo{person}{Kalyan Veeramachaneni}.} \bibinfo{year}{2015}\natexlab{}.
\newblock \showarticletitle{Deep feature synthesis: Towards automating data
  science endeavors}. In \bibinfo{booktitle}{{\em Data Science and Advanced
  Analytics (DSAA), 2015. 36678 2015. IEEE International Conference on}}. IEEE,
  \bibinfo{pages}{1--10}.
\newblock


\bibitem[\protect\citeauthoryear{Karp}{Karp}{1972}]%
        {karp1972reducibility}
\bibfield{author}{\bibinfo{person}{Richard~M Karp}.}
  \bibinfo{year}{1972}\natexlab{}.
\newblock \showarticletitle{Reducibility among combinatorial problems}.
\newblock In \bibinfo{booktitle}{{\em Complexity of computer computations}}.
  \bibinfo{publisher}{Springer}, \bibinfo{pages}{85--103}.
\newblock


\bibitem[\protect\citeauthoryear{Kazemi and Poole}{Kazemi and Poole}{2017}]%
        {deeprl}
\bibfield{author}{\bibinfo{person}{Seyed~Mehran Kazemi} {and}
  \bibinfo{person}{David Poole}.} \bibinfo{year}{2017}\natexlab{}.
\newblock \showarticletitle{RelNN: {A} Deep Neural Model for Relational
  Learning}.
\newblock \bibinfo{journal}{{\em CoRR\/}}  \bibinfo{volume}{abs/1712.02831}
  (\bibinfo{year}{2017}).
\newblock
\showeprint[arxiv]{1712.02831}


\bibitem[\protect\citeauthoryear{Khurana, Turaga, Samulowitz, and
  Parthasarathy}{Khurana et~al\mbox{.}}{}]%
        {cognito}
\bibfield{editor}{\bibinfo{person}{Udayan Khurana}, \bibinfo{person}{Deepak
  Turaga}, \bibinfo{person}{Horst Samulowitz}, {and}
  \bibinfo{person}{Srinivasan Parthasarathy}} (Eds.).
\newblock \bibinfo{booktitle}{{\em Cognito: Automated Feature Engineering for
  Supervised Learning , ICDM 2016}}.
\newblock


\bibitem[\protect\citeauthoryear{Knobbe, Siebes, and van~der Wallen}{Knobbe
  et~al\mbox{.}}{1999}]%
        {arno}
\bibfield{author}{\bibinfo{person}{Arno~J. Knobbe}, \bibinfo{person}{Arno
  Siebes}, {and} \bibinfo{person}{Dan{\"{\i}}el van~der Wallen}.}
  \bibinfo{year}{1999}\natexlab{}.
\newblock \showarticletitle{Multi-relational Decision Tree Induction}. In
  \bibinfo{booktitle}{{\em Principles of Data Mining and Knowledge Discovery,
  Third European Conference, {PKDD} '99, Prague, Czech Republic, September
  15-18, 1999, Proceedings}}. \bibinfo{pages}{378--383}.
\newblock


\bibitem[\protect\citeauthoryear{Kotthoff, Thornton, Hoos, Hutter, and
  Leyton-Brown}{Kotthoff et~al\mbox{.}}{2016}]%
        {kotthoff2016auto}
\bibfield{author}{\bibinfo{person}{Lars Kotthoff}, \bibinfo{person}{Chris
  Thornton}, \bibinfo{person}{Holger~H Hoos}, \bibinfo{person}{Frank Hutter},
  {and} \bibinfo{person}{Kevin Leyton-Brown}.} \bibinfo{year}{2016}\natexlab{}.
\newblock \showarticletitle{Auto-WEKA 2.0: Automatic model selection and
  hyperparameter optimization in WEKA}.
\newblock \bibinfo{journal}{{\em Journal of Machine Learning Research\/}}
  \bibinfo{volume}{17} (\bibinfo{year}{2016}), \bibinfo{pages}{1--5}.
\newblock


\bibitem[\protect\citeauthoryear{Kramer, Lavrac, and Flach}{Kramer
  et~al\mbox{.}}{2001}]%
        {propositionalization}
\bibfield{author}{\bibinfo{person}{Stefan Kramer}, \bibinfo{person}{Nada
  Lavrac}, {and} \bibinfo{person}{Peter Flach}.}
  \bibinfo{year}{2001}\natexlab{}.
\newblock \showarticletitle{Propositionalization Approaches to Relational Data
  Mining}.
\newblock In \bibinfo{booktitle}{{\em Relational Data Mining}},
  \bibfield{editor}{\bibinfo{person}{Saso Dzeroski} {and} \bibinfo{person}{Nada
  Lavrac}} (Eds.). \bibinfo{publisher}{Springer New York Inc.},
  \bibinfo{address}{New York, NY, USA}, \bibinfo{pages}{262--286}.
\newblock
\showISBNx{3-540-42289-7}


\bibitem[\protect\citeauthoryear{Lam, Minh, Sinn, Buesser, and Wistuba}{Lam
  et~al\mbox{.}}{2018}]%
        {r2n}
\bibfield{author}{\bibinfo{person}{Hoang~Thanh Lam}, \bibinfo{person}{Tran~Ngoc
  Minh}, \bibinfo{person}{Mathieu Sinn}, \bibinfo{person}{Beat Buesser}, {and}
  \bibinfo{person}{Martin Wistuba}.} \bibinfo{year}{2018}\natexlab{}.
\newblock \showarticletitle{Learning Features For Relational Data}.
\newblock \bibinfo{journal}{{\em CoRR\/}}  \bibinfo{volume}{abs/1801.05372}
  (\bibinfo{year}{2018}).
\newblock
\showeprint[arxiv]{1801.05372}
\showURL{%
\url{http://arxiv.org/abs/1801.05372}}


\bibitem[\protect\citeauthoryear{Lam, Thiebaut, Sinn, Chen, Mai, and Alkan}{Lam
  et~al\mbox{.}}{2017}]%
        {onebm}
\bibfield{author}{\bibinfo{person}{Hoang~Thanh Lam},
  \bibinfo{person}{Johann{-}Michael Thiebaut}, \bibinfo{person}{Mathieu Sinn},
  \bibinfo{person}{Bei Chen}, \bibinfo{person}{Tiep Mai}, {and}
  \bibinfo{person}{Oznur Alkan}.} \bibinfo{year}{2017}\natexlab{}.
\newblock \showarticletitle{One button machine for automating feature
  engineering in relational databases}.
\newblock \bibinfo{journal}{{\em CoRR\/}}  \bibinfo{volume}{abs/1706.00327}
  (\bibinfo{year}{2017}).
\newblock
\showeprint[arxiv]{1706.00327}
\showURL{%
\url{http://arxiv.org/abs/1706.00327}}


\bibitem[\protect\citeauthoryear{Muggleton and Raedt}{Muggleton and
  Raedt}{1994}]%
        {luc}
\bibfield{author}{\bibinfo{person}{Stephen Muggleton} {and}
  \bibinfo{person}{Luc~De Raedt}.} \bibinfo{year}{1994}\natexlab{}.
\newblock \showarticletitle{Inductive Logic Programming: Theory and Methods}.
\newblock \bibinfo{journal}{{\em JOURNAL OF LOGIC PROGRAMMING\/}}
  \bibinfo{volume}{19}, \bibinfo{number}{20} (\bibinfo{year}{1994}),
  \bibinfo{pages}{629--679}.
\newblock


\bibitem[\protect\citeauthoryear{Olson, Bartley, Urbanowicz, and Moore}{Olson
  et~al\mbox{.}}{2016}]%
        {tpot}
\bibfield{author}{\bibinfo{person}{Randal~S. Olson}, \bibinfo{person}{Nathan
  Bartley}, \bibinfo{person}{Ryan~J. Urbanowicz}, {and}
  \bibinfo{person}{Jason~H. Moore}.} \bibinfo{year}{2016}\natexlab{}.
\newblock \showarticletitle{Evaluation of a Tree-based Pipeline Optimization
  Tool for Automating Data Science}. In \bibinfo{booktitle}{{\em Proceedings of
  the Genetic and Evolutionary Computation Conference 2016}} {\em
  (\bibinfo{series}{GECCO '16})}. \bibinfo{publisher}{ACM},
  \bibinfo{address}{New York, NY, USA}, \bibinfo{pages}{485--492}.
\newblock
\showISBNx{978-1-4503-4206-3}


\bibitem[\protect\citeauthoryear{Perov{\v{s}}ek, Vavpeti{\v{c}}, Kranjc,
  Cestnik, and Lavra{\v{c}}}{Perov{\v{s}}ek et~al\mbox{.}}{2015}]%
        {wordification}
\bibfield{author}{\bibinfo{person}{Matic Perov{\v{s}}ek},
  \bibinfo{person}{An{\v{z}}e Vavpeti{\v{c}}}, \bibinfo{person}{Janez Kranjc},
  \bibinfo{person}{Bojan Cestnik}, {and} \bibinfo{person}{Nada Lavra{\v{c}}}.}
  \bibinfo{year}{2015}\natexlab{}.
\newblock \showarticletitle{Wordification: Propositionalization by unfolding
  relational data into bags of words}.
\newblock \bibinfo{journal}{{\em Expert Systems with Applications\/}}
  \bibinfo{volume}{42}, \bibinfo{number}{17} (\bibinfo{year}{2015}),
  \bibinfo{pages}{6442--6456}.
\newblock


\bibitem[\protect\citeauthoryear{Snoek, Larochelle, and Adams}{Snoek
  et~al\mbox{.}}{2012}]%
        {Snoek2012}
\bibfield{author}{\bibinfo{person}{Jasper Snoek}, \bibinfo{person}{Hugo
  Larochelle}, {and} \bibinfo{person}{Ryan~P. Adams}.}
  \bibinfo{year}{2012}\natexlab{}.
\newblock \showarticletitle{Practical Bayesian Optimization of Machine Learning
  Algorithms}. In \bibinfo{booktitle}{{\em NIPS 2012.}}
  \bibinfo{pages}{2960--2968}.
\newblock


\bibitem[\protect\citeauthoryear{Thornton, Hutter, Hoos, and
  Leyton-Brown}{Thornton et~al\mbox{.}}{}]%
        {ThoHutHooLey13-AutoWEKA}
\bibfield{author}{\bibinfo{person}{C. Thornton}, \bibinfo{person}{F. Hutter},
  \bibinfo{person}{H.~H. Hoos}, {and} \bibinfo{person}{K. Leyton-Brown}.}
\newblock \showarticletitle{Auto-{WEKA}: Combined Selection and Hyperparameter
  Optimization of Classification Algorithms}. In \bibinfo{booktitle}{{\em
  Proc.~of KDD-2013}}.
\newblock


\end{thebibliography}

\begin{appendices}
\section{NP-Hardness Proof}

\begin{figure}[tb]
    \centering
    \includegraphics[width=1.0\columnwidth]{./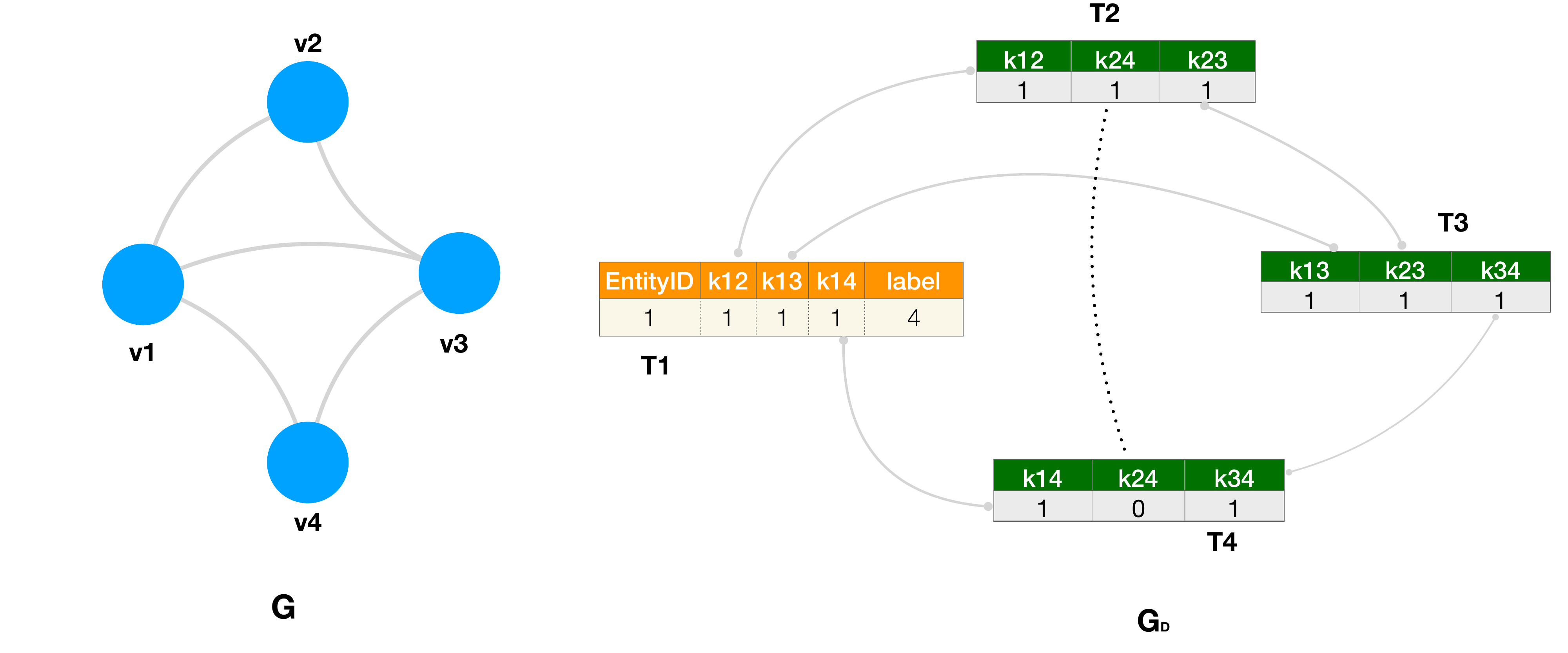}
    \caption{A reduction from Hamiltonian cycle problem to the problem finding the optimal joining path for engineering features from relational data for a given predictive analytics problem.}
    \label{fig:np-hardness}
\end{figure}
 
\begin{proof}
Problem \ref{prob:Feature learning} is an optimization problem, the decision version asks whether there exists a solution such that $ L_{P, F, g}(Y,\hat{Y}) = 0$. We prove the NP-hardness by reducing the given decision problem to the \textit{Hamiltonian cycle problem} which was well-known as an NP-Complete problem as discussed in \cite{karp1972reducibility}. Given a graph $G(E,V)$, where $E$ is a set of undirected edges and $V = \{v_1,v_2,\cdots,v_n\}$ is a set of $n$ nodes, the Hamiltonian cycle problem asks if there exists a cycle on the graph such that all nodes are visited on the cycle and every node is visited exactly twice.

Given an instance of the Hamiltonian cycle problem, we create an instance of the feature learning problem with a relational graph as demonstrated in Figure \ref{fig:np-hardness}. We assume that we have a database $D = \{T_1, T_2, \cdots, T_{n}\}$ with $n$ tables, each table $T_i$ corresponds to exactly one node $v_i \in V$. Assume that each table has only one row. For each pair of tables $T_i$ and $T_j$ (where $i < j$), there is a foreign key $k_{ij}$ presenting in both tables such that the values of $k_{ij}$ in $T_i$ and $T_j$ are the same if and only if there is an edge $(v_i,v_j) \in E$. 

Assume that $T_1$ is the main table which has an additional label column with value equal to $n$. We also assume that all the keys $k_{ij}$ have unique value in each table it presents which means that for each entry in $T_i$ there is at most one entry in $T_j$ with the same $k_{ij}$ value and vice versa, we call such relations between tables one-one. Recall that the relational graph $G_D$ constructed for the database $D$, where nodes are tables and edges are relational links, is a fully connected graph. Let $p$ is a path defined on $G_D$ and starts from the main table $T_1$. Because all the relations between tables are $one-one$, following the joining path $p$ we can either obtain an empty or a set containing at most one element, denoted as $J_p$. 

A cycle in a graph is simple if all nodes are visited exactly twice. Lets assume $F$ as the set of functions such that: if $J_p$ is empty then $f(J_p) = 0$ and if $J_p$ is not empty then $f(J_p) = k$ where $k$ is the length of the longest simple cycle which is a sub-graph of $p$. Let $g$ be the identity function. The decision problem asks whether there exists a path $p$ such that $ L_{p, F, g}(Y,\hat{Y}) = 0$ is equivalent to asking whether $g(f(J_p)) = n$ or $f(J_p) = n$ assuming $g$ is an identity function. 

Assume that $g(f(J_p)) = n$, we can imply that $J_p$ is not empty and $p$ is a Hamiltonian cycle in $G_D$. Since $J_p$ is not empty, $p$ is a sub-graph of $G$. Hence $G$ also possess at least one Hamiltonian cycle. On the other hands, if $p$ is a sub-graph of $G$ and it is a Hamiltonian cycle, then since $G$ is a sub-graph of the fully connected graph $G_D$ we must have $g(f(J_p)) = n$ as well. 

The given reduction from the Hamiltonian cycle problem is a polynomial time reduction because the time for construction of the database $D$ is linear in the size of the graph $G$. Therefore, the NP-hardness follows.

\end{proof}

\section{Proof of Expressiveness Theorem}
First, we need to prove the following lemma:
\begin{lemma}
A recurrent neural network $rnn(s, W, H, U)$ with linear activation is a set function if and only if $H=1$ or $rnn(s, W, H, U)$ is a constant.
\label{lemma:set function rnn}
\end{lemma}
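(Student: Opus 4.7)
The plan is to unroll the linear RNN into a closed-form affine functional of the input sequence, translate permutation invariance into an algebraic identity on the parameters, and then read off the dichotomy.

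First, because the activation is linear I would iterate the recurrence $h_t = s_t W + h_{t-1} H + b$ from $h_0$ to obtain $h_n = h_0 H^n + \sum_{t=1}^n s_t W H^{n-t} + b \sum_{j=0}^{n-1} H^j$. Composing with the read-out $h_n U + c$ then yields the closed form
\begin{equation}
rnn(s, W, H, U) \;=\; c + h_0 H^n U + b \Bigl( \sum_{j=0}^{n-1} H^j \Bigr) U + \sum_{t=1}^n \bigl( W H^{n-t} U \bigr)\, s_t ,
\end{equation}
so the only $s$-dependent part is the final sum, and the position $t$ enters only through the coefficient $W H^{n-t} U$.

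Next I would exploit the set-function hypothesis. For $rnn$ to be invariant under every permutation of $s$ the coefficient $W H^{n-t} U$ must be independent of $t$; comparing $t = n$ with a generic $t$ gives $W H^j U = W U$ for every $j \in \{0, 1, \ldots, n-1\}$, and because the same parameters must define a set function on sequences of every length this identity in fact has to hold for all $j \geq 0$. The case $j = 1$ already implies all the others and collapses the infinite family of constraints to the single identity $W(H - I)U = 0$. In the scalar parametrisation the paper is using---where ``$H = 1$'' means $H$ equals the identity---this identity is satisfied in exactly two ways: either $H = 1$, the branch that supports the non-trivial formula displayed in equation (\ref{eq:rnn set}), or $H \neq 1$ in which case the scalar equation forces $WU = 0$; substituting $WU = 0$ back into the closed form above annihilates the final sum and leaves $c + h_0 H^n U + b (\sum_j H^j) U$, which is a constant function of the contents of $s$. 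The converse direction (that both branches do produce genuine set functions) is immediate from the same closed form.

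The main obstacle I anticipate is matching this clean scalar dichotomy to the way $W, H, U$ are written as matrices elsewhere in the paper. In the genuinely matrix case the identity $W(H-I)U = 0$ only forces $WU$ into the $1$-eigenspace of $H$ rather than all the way to zero, so either a commutative or scalar interpretation of $H$ must be stated up front, or one must decompose $H$ into its $H = I$ and $H \neq I$ invariant blocks and show that the contribution of the second block is killed by the $U$-readout. I would go with the scalar reading in order to stay faithful to the ``$H = 1$'' notation of the lemma, and flag the general matrix extension as a remark.
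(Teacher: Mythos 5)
Your proof is correct and follows essentially the same route as the paper's: both unroll the linear recurrence so that permutation invariance reduces to the algebraic identity $U(H-1)W=0$, from which the dichotomy ($H=1$ versus a degenerate, input-independent network) is read off. The only difference is one of thoroughness --- the paper checks just the two-element swap $\{x_0,x_1\}\mapsto\{x_1,x_0\}$ and leaves the general length and the converse direction implicit, whereas you give the full closed form for arbitrary $n$, verify both directions, and correctly flag the matrix-versus-scalar subtlety that the paper glosses over.
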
 
\begin{proof}

Denote $s$ as a set of numbers and $p(s)$ is any random permutation of $s$. A set function $f(s)$ is a map from any set $s$ to a real-value. Function $f$ is invariant with respect to set-permutation operation, i.e. $f(s) = f(p(s))$. For simplicity, we prove the lemma when the input is a set of scalar numbers. The general case for a set of vectors is proved in a similar way.

Consider the special case when $s = \{x_0, x_1\}$ and $p(s) = \{x_1, x_0\}$. According to definition of recurrent neural net we have:
\begin{eqnarray}
h_t &=& b + H*h_{t-1} + W*x_t \\ 
o_t &=& c + U*h_t
\end{eqnarray}

from which we have $rnn(s) = o_2$, where:
\begin{eqnarray}
h_1 &=& b + H*h_0 + W*x_0 \\
o_1 &=& c + U*h_1 \\
h_2 &=& b + H*h_1 + W*x_1 \\
o_2 &=& c + U*h_2 
\end{eqnarray}

In a similar way we can obtain the value of $rnn(p(s)) = o^*_2$, where:
\begin{eqnarray}
h^*_1 &=& b + H*h^*_0 + W*x_1 \\
o^*_1 &=& c + U*h^*_1 \\
h^*_2 &=& b + H*h^*_1 + W*x_0 \\
o^*_2 &=& c + U*h^*_2 
\end{eqnarray}
Since $rnn(p(s)) = rnn(s)$, we infer that:
\begin{eqnarray}
U*(H-1)*W *(x_0-x_1) = 0
\end{eqnarray}
The last equation holds for all value of $x_0, x_1$, therefore, either $H=1$, $W=0$ or $U=0$. The lemma is proved.
\end{proof}

\begin{proof}
According to Lemma \ref{lemma:set function rnn}, $rnn(s, W, H, U)$ is either a constant function or $H=1$. Replace $H = 1$ to the formula of an RNN we can easily obtain equation \ref{eq:rnn set}.
\end{proof}
\begin{figure*}[tb]
    \centering
    \includegraphics[width=1.0\textwidth]{./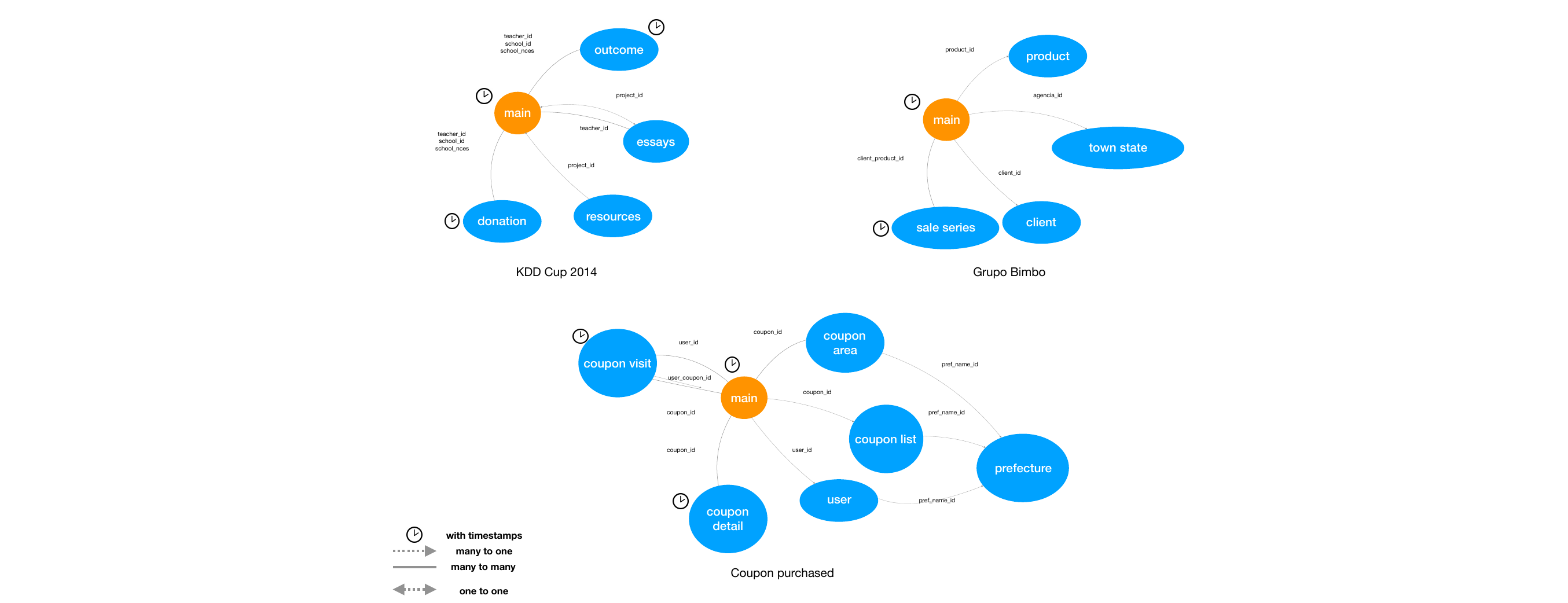}
    \caption{Tweaked relational graphs of the Kaggle data used in our experiments for DFS}
    \label{tweak-database}
\end{figure*} 
\section{Efficient Implementation on GPU}
Deep learning takes advantage of fast matrix computations on GPU to speed up its training time. The speed-up is highly dependent upon if the computation can be packed into a fixed size tensor before sending it to GPUs for massive parallel matrix computation. A problem with the network is that the structures of relational trees are different even for a given joining path. For example, the relational trees in Figure \ref{exp:relational tree} have different structures depending on input data. This issue makes it difficult to normalize the computation across different relational trees in the same mini-batch to take the advantage of GPU computation. 

In this section, we discuss a computation normalization approach that allows speeding up the implementation 5x-10x using GPU computation under the assumption that the input to an R2N network are relational trees we set $\{D_{p_1}, D_{p_2}, \cdots, D_{p_q} \}$, where $D_{p_i} = \{t^{p_i}_1, t^{p_i}_2, \cdots, t^{p_i}_m\}$. 

It is important to notice that $t^{p_k}_i$ and $t^{p_l}_i$ have different structure when $p_l$ and $p_k$ are different. Therefore, normalization across joining paths is not a reasonable approach. For a given joining path $p_i$, the trees $t^{p_i}_k$ and $t^{p_i}_l$ in the set $D_{p_i}$ may have different structures as well. Fortunately, those trees share commons properties:
\begin{itemize}
\item they have the same maximum depth equal to the length of the path $p_i$
\item transformation based on RNN at each depth of the trees are shared
\end{itemize} 

Thanks to the common properties between the trees in $D_{p_i}$ the computation across the trees can be normalized. The input data at each depth of all the trees in $D_{p_i}$ (or a mini-batch) are transformed at once using the shared transformation network at the given depth. The output of the transformation is a list, for which we just need to identify which output corresponds to which tree for further transformation at the parent nodes of the trees.

\section{Baseline Method Settings}
DFS is considered as the state-of-the-art for automation of feature engineering for relational data has recently been open-sourced\footnote{https://www.featuretools.com/}. We compared R2N to DFS at version 0.1.14. It is important to notice that the open-source version of DFS has been improved a lot since its first publication \cite{DFS}. For example, in the first version described in the publication there is no concept of temporal index which is very important to avoid mining leakages.

To use DFS properly, it requires knowledge about the data to create additional tables for interesting entities and to avoid creating diamond relational graphs because DFS doesn't support diamond loops in the graph and does not allow many-many relations. The results of Grupo Bimbo and Coupon purchase competitions were reported using the open-source DFS after consulting with the authors on how to use DFS properly on these datasets.

For the Bimbo and Coupon purchased data, the relational graphs shown in Figure \ref{fig:egraph} are not supported by DFS as they contain many-many relations and diamond subgraphs. Therefore, we have tweaked these graphs to enable them for DFS. Particularly, for Groupo Bimbo the relation between main and series tables is many-many. To go around this problem, we have created an additional table called product-client from the sale series table. Each entry in the new table encodes the product, client pairs. The product-client table is the main table correspond to product-client pairs. Since the competition asked for predicting sales of every pair of product-client at different time points, we created a cut-off time-stamp table, where each entry corresponds to exactly one cut-off timestamp. The new relational graph is presented in Figure \ref{tweak-database}. We have run DFS with maximum depth set to 1 and 2 and 3.

For the Coupon Purchase dataset, more efforts were needed to prepare the input for DFS because the original relational graph contains both diamond loops and many-many relations. The latter issue can be resolved by changing the connections as demonstrated in Figure \ref{tweak-database}. To avoid diamond loops, we need to delete some relations by deleting the relations (marked with an X) in Figure \ref{tweak-database}. Alternatively, we also tried to delete the relation between the main and coupon-visit table but that led to much worse prediction than the given choice.


\end{appendices}
\end{document}